\newcommand{\as}[1]%
{\marginpar{\textcolor{Brown}{\textsc{Andreas}}}\textcolor{Brown}{[[#1]]}}
\newcommand{\ce}[1]%
{\marginpar{\textcolor{OliveGreen}{\textsc{Caroline}}}\textcolor{OliveGreen}{[[#1]]}}
\newcommand{\pvh}[1]%
{\marginpar{\textcolor{blue}{\textsc{Pascal}}}\textcolor{blue}{[[#1]]}}
\newcommand{\todo}[1]%
{\marginpar{\textcolor{red}{\textsc{ToDo}}}\textcolor{red}{[[#1]]}}
\newcommand{\GS}{\ensuremath{\mathcal{G}}}
\newcommand{\sNS}{\mathcal{N}}% set of nodes
\newcommand{\sES}{\mathcal{E}}% set of evacuation nodes
\newcommand{\sTS}{\mathcal{T}}% set of transit nodes
\newcommand{\sSS}{\mathcal{S}}% set of safe nodes
\newcommand{\sAS}{\mathcal{A}}% set of edges
\newcommand{\evacPath}[1][]{\ifthenelse{\isempty{#1}}{{\ensuremath{p}_{k}}}{{\ensuremath{p}_{#1}}}}% evacuation path
\newcommand{\pathCap}[1][]{\ifthenelse{\isempty{#1}}{u_{\evacPath}}{u_{\evacPath[#1]}}}% paths capacity
\newcommand{\pathTransit}[1][]{\ifthenelse{\isempty{#1}}{t_{\evacPath}}{t_{\evacPath[#1]}}}% path travel time
\newcommand{\pathLastDep}{\textsc{lastdep($\evacPath$)}}% path last departure time
\newcommandtwoopt{\travelPathNode}[2][][]{\ifthenelse{\isempty{#1} \OR \isempty{#2}}{\ensuremath{t_{k,n}}}{\ensuremath{t_{#1,#2}}}}% path travel time from start to a node
\newcommand{\horiz}{\mathcal{H}}% evacuation scheduling horizon
\newcommand{\dependentNodesSet}{\mathcal{D}}% set of dependent nodes
\newcommand{\dependentNodesSetSet}{\Upsilon}% set containing all set of dependent nodes
\newcommand{\safeNodeArea}[1][]{\ifthenelse{\isempty{#1}}{\ensuremath{s_k}}{\ensuremath{s_{#1}}}}% the safe node of one evacuation area
\newcommand{\startVar}[1][]{\ifthenelse{\isempty{#1}}{\textsc{start}_k}{\textsc{start}_{#1}}}
\newcommand{\taskEndVar}[1][]{\ifthenelse{\isempty{#1}}{\textsc{end}_k}{\textsc{end}_{#1}}}
\newcommand{\durVar}[1][]{\ifthenelse{\isempty{#1}}{\textsc{dur}_k}{\textsc{dur}_{#1}}}
\newcommand{\taskVar}[1][]{\ifthenelse{\isempty{#1}}{\textsc{task}_k}{\textsc{task}_{#1}}}
\newcommand{\flowVar}[1][]{\ifthenelse{\isempty{#1}}{\textsc{flow}_k}{\textsc{flow}_{#1}}}
\newcommand{\flowRateVar}[1][]{\ifthenelse{\isempty{#1}}{\lambda_{\flowVar}}{\lambda_{\flowVar[#1]}}}
\newcommand{\flowVarUB}[1][]{\ifthenelse{\isempty{#1}}{\flowVar^{\textsc{ub}}}{\flowVar[#1]^{\textsc{ub}}}}
\newcommand{\objVar}{\textsc{objective}}
\newcommand{\FontGlobals}[1]{\texttt{#1}\xspace}
\newcommand{\cumu}{\FontGlobals{cumulative}}
\newcommand{\disj}{\FontGlobals{disjunctive}}
\newcommand{\st}{\mbox{s.t.}\quad }% subject to
\newcommand{\direcdep}{\triangle} % edge direct dependence relationship
\newcommand{\dep}{\top} % edge dependance relationship
\newcommand{\indep}{\bot} % edge independence relationship
\newcommand{\dom}{>}% edge dominance relationship
\newcommand{\modelsim}{NEPP\xspace}% 
\newcommand{\modelphased}{NPEPP\xspace}% 
\newcommand{\cp}{CP\xspace}     % Constraint programming
\newcommand{\et}[1]{\ensuremath{t_{#1}}}% edge travel time
\newcommand{\eu}[2][]{\ensuremath{u_{{#2}}^{{#1}}}}% edge capacity
\newcommand{\ef}[1]{\ensuremath{b_{#1}}}% edge blocked time
\newcommand{\nd}[1]{\ensuremath{d_{#1}}}% node demand
\newcolumntype{R}{>{\raggedleft\arraybackslash}X}%
\newcolumntype{S}{>{\hsize=.3\hsize}X}
\begin{document}

\mainmatter  % start of an individual contribution
%%%%%%%%%%%%%%%%%%%%%%%%%%%%%%%%%%%%%%%%%%%%%%%%%%%%%%%
% first the title is needed
\title{A Constraint Programming Approach \\
  for Non-Preemptive Evacuation Scheduling}

% a short form should be given in case it is too long for the running head
%\titlerunning{A path-generation matheuristic for large scale evacuation planning}

% the name(s) of the author(s) follow(s) next
%
\author{Caroline Even \inst{1}%
\and Andreas Schutt \inst{1,2}%
\and Pascal Van Hentenryck \inst{1,3}}
%
%\authorrunning{Lecture Notes in Computer Science: Authors' Instructions}
% (feature abused for this document to repeat the title also on left hand pages)
\institute{NICTA Optimisation Research Group, Melbourne, Australia%
\thanks{NICTA is funded by the Australian Government through the Department of Communications and the Australian Research Council through the ICT Centre of Excellence Program.}\\
\url{{firstname.lastname}@nicta.com.au} \quad
\url{http://org.nicta.com.au}
\and University of Melbourne, Victoria 3010, Australia
\and Australian National University, Canberra, Australia}

\toctitle{A Constraint Programming Approach for Non-Preemptive Evacuation Scheduling}
\tocauthor{Caroline Even, Andreas Schutt, and Pascal Van Hentenryck}
\maketitle
%%%%%%%%%%%%%%%%%%%%%%%%%%%%%%%%%%%%%%%%%%%%%%%%%%%%%%%

%--------------------------
% Abstracts
%--------------------------
\begin{abstract} 
  Large-scale controlled evacuations require emergency services to
  select evacuation routes, decide departure times, and mobilize
  resources to issue orders, all under strict time
  constraints. Existing algorithms almost always allow for preemptive
  evacuation schedules, which are less desirable in practice. This
  paper proposes, for the first time, a constraint-based scheduling
  model that optimizes the evacuation flow rate (number of vehicles
  sent at regular time intervals) and evacuation phasing of widely
  populated areas, while ensuring a non-preemptive evacuation for each
  residential zone. Two optimization objectives are considered: (1) to
  maximize the number of evacuees reaching safety and (2) to minimize
  the overall duration of the evacuation. Preliminary results on a set
  of real-world instances show that the approach can produce, within a
  few seconds, a non-preemptive evacuation schedule which is either
  optimal or at most 6\% away of the optimal preemptive solution.
  \keywords{constraint-based evacuation scheduling - non-preemptive scheduling - phased
    evacuation - simultaneous evacuation - actionable plan - real-world operational constraints - network flow problem}
\end{abstract}

%\begin{keyword}
%Evacuation \sep behavioral Operational Research 
%\end{keyword}

%\linenumbers
%% BEGIN
%%%%%%%%%%%%%%%%%%%%%%%%%%%%%%%%%%%%%%%%%%%%%%%%%%%%%%%%%%%%%%%%%%%%%%

%\begin{abstract}
%This work presents a column generation algorithm that produces actionable large-scale evacuation assigning a single evacuation path and departure time to each evacuated area, 
%\end{abstract}

%=============================================
\section{Introduction}
%=============================================

Evacuation planning is a critical part of the preparation and response
to natural and man-made disasters. Evacuation planning assists
evacuation agencies in mitigating the negative effects of a disaster,
such as loss or harm to life, by providing them guidelines and
operational evacuation procedures so that they can make informed
decisions about whether, how and when to evacuate residents. In the
case of controlled evacuations, evacuation agencies instruct each
endangered resident to follow a specific evacuation route at a given
departure time. To communicate this information in a timely fashion,
evacuation planners must design plans which take into account
operational constraints arising in actual evacuations. In particular,
two critical challenges are the deployment of enough resources to give
precise and timely evacuation instructions to the endangered
population and the compliance of the endangered population to the
evacuation orders.  In practice, the control of an evacuation is
achieved through a mobilization process, during which mobilized
resources are sent to each residential area in order to give
instructions to endangered people. The number of mobilized resources determines the
overall rate at which evacuees leave. Finally, to maximize the chances
of compliance and success of a controlled evacuation, the evacuation
and mobilization plans must be easy to deploy for evacuation agencies
and should not leave, to the evacuees, uncontrolled alternative
routes that would affect the evacuation negatively.

Surprisingly, local authorities still primarly rely on expert
knowledge and simple heuristics to design and execute evacuation
plans, and rarely integrate human behavioral models in the process.
This is partly explained by the limited availability of approaches
producing evacuation plans that follow the current practice. Apart
from a few exceptions
\cite{Bish2013,Even2014,Even2015,Huibregtse2011,Pillac2013,Pillac2014} 
existing evacuation approaches rely on free-flow models which assume
that evacuees can be dynamically routed in the transportation network
\cite{Bretschneider2012,Lim2012,Richter2013}. These free-flow models
however violate a desirable operational constraint in actual evacuation
plans, i.e., the fact that all evacuees in a given residential zone
should preferably follow the same evacuation route.

Recently, a handful of studies considered evacuation plans where each
residential area is assigned a single evacuation path. These studies
define both a set of evacuation routes and a departure
schedule. Huibregtse et al. \cite{Huibregtse2011} propose a two-stage
algorithm that first generates a set of evacuation routes and feasible
departure times, and then assigns a route and time to each evacuated
area using an ant colony optimization algorithm.  In subsequent work,
the authors studied the robustness of the produced solution
\cite{Huibregtse2010}, and strategies to improve the compliance of
evacuees \cite{Huibregtse2012}.  Pillac et al. \cite{Pillac2014} first
introduced the Conflict-based Path Generation (CPG) approach which was
extended to contraflows by Even et al. \cite{Even2014}. CPG features a
master problem which uses paths for each residential node to schedule
the evacuation and a pricing problem which heuristically generates new
paths addressing the conflicts in the evacuation schedule.

These evacuation algorithms however do not guarantee that evacuees
will follow instructions. If the evacuation plan contains forks in the road,
evacuees may decide to change their evacuation routes as the
evacuation progresses.  This issue is addressed in
\cite{Andreas2009,Even2015} which propose evacuation plans without
forks. The resulting evacuation plan
can be thought of as a forest of evacuation trees where each tree is
rooted at a safe node (representing, say, an evacuation center) and
with residential areas at the leaves. By closing roads or controlling
intersections, these evacuation trees ensure the compliance of the
evacuees and avoid congestions induced by drivers slowing down at a
fork. Even et al. \cite{Even2015} produce such convergent evacuation
plans by decomposing the evacuation problem in a tree-design problem
and an evacuation scheduling problem.  Andreas and Smith
\cite{Andreas2009} developed a Benders decomposition algorithm that
selects convergent evacuation routes that are robust to a set of
disaster scenarios.

All of the approaches reviewed above allow preemption: The evacuation
of a residential area can be interrupted and restarted arbitrarily.
This is not desirable in practice, since such schedules will confuse
both evacuees and emergency services, and will be hard to enforce.
Non-preemptive schedules have been considered in
\cite{Cepolina08,Pillac2015} in rather different ways. In
\cite{Cepolina08}, a phased evacuation plan evacuates each area
separately, guaranteeing that no vehicles from different areas travel
on a same path at the same time. By definition, phased evacuation does not merge
evacuation flows, which is motivated by empirical evidence that such
merging can reduce the road network capacities.  The algorithm in
\cite{Pillac2015} is a column-generation approach for simultaneous
evacuation, i.e., evacuations where multiple paths can share the same
road segment at the same time. Each column represents the combination
of a path, a departure time, and a response curve capturing the
behavioral response of evacuees for each evacuation area.  Here the
flow rate of each evacuation area is restricted to pre-existing
response curves, and columns are generated individually for each
evacuation area. This column-generation approach requires a
discretization of the evacuation horizon.

\emph{This paper proposes, for the first time, a constraint
  programming approach to generate non-preemptive evacuation
  schedules.} It takes as input a set of evacuation routes, which are
either chosen by emergency services or computed by an external algorithm.
 The constraint-based scheduling model
associates a task with each residential area, uses decision variables
for modeling the number of evacuees (i.e., the \emph{flow}), the
number of vehicles to be evacuated per time unit (i.e., the \emph{flow
  rate}), and the starting time of the area evacuation; It also uses
cumulative constraints to model the road capacities. In addition, the
paper presents a decomposition scheme and dominance relationships that
decrease the computational complexity by exploiting the problem
structure. Contrary to \cite{Pillac2015}, the constraint-programming
model uses a decision variable for the flow rate of each evacuation
area (instead of a fixed set of values) and avoids discretizing
time. In contrast to~\cite{Cepolina08}, the constraint-programming
model allows for simultaneous evacuation while satisfying
practice-related constraints.

The constraint-programming model was applied on a real-life evacuation
case study for the Hawkesbury-Nepean region in New South Wales,
Australia. This region is a massive flood plain protected from a
catchment area (the blue mountains) by the Warra\-gamba dam. A major
spill would create damages that may reach billions of dollars and
require the evacuation of about 80,000 people.  Preliminary
experimental results indicate that the constraint-programming model
can be used to generate non-preemptive schedules that are almost
always within 5\% of the optimal preemptive schedules generated in prior
work. These results hold both for maximizing the number of evacuees
for a given time horizon and for minimizing the clearance time (i.e.,
the earliest time when everyone is evacuated). These results are
particularly interesting, given that the optimal preemptive solutions
produce evacuation plans which are far from practical.  Indeed,
Fig.~\ref{fig:fsp_flow} shows the repartition of departure times for
seven residential areas in the original HN80 instance using a
preemptive schedule produced by the algorithm in
\cite{Even2015}. Observe how departure times are widely distributed
within the scheduling horizon, indicating that the plan makes heavy
use of preemption and is virtually impossible to implement in
practice. Finally, experimental results on the phased version of the
algorithm indicate that phased evacuations are much less time
effective, and should not be the preferred method for short-notice or
no-notice evacuation.

\begin{figure}[tbp]
	\centering
	\includegraphics[width=\textwidth-12em]{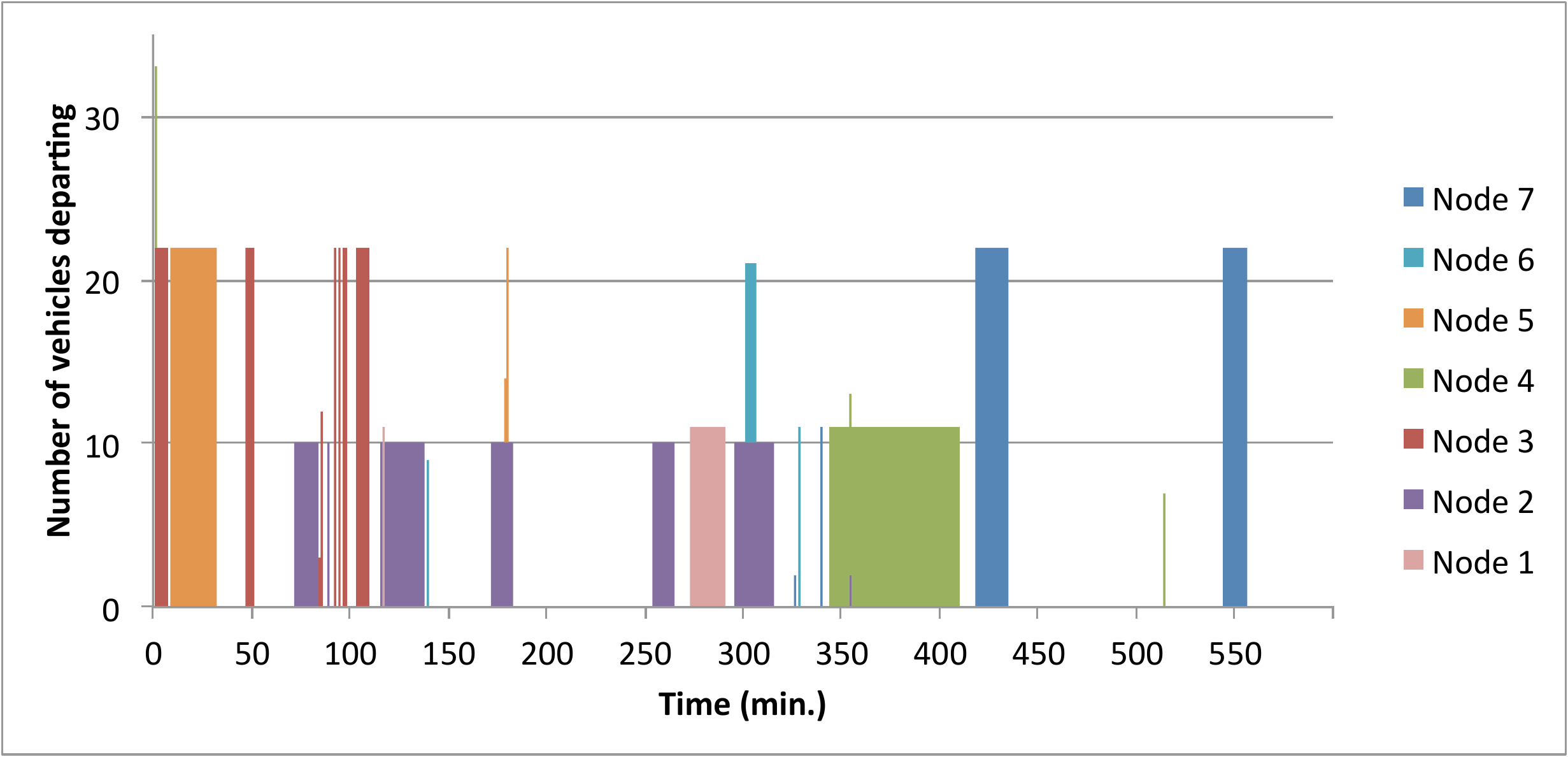}
	\caption{Departure times and flows of 7 residential areas of the HN80 instance with
          the preemptive algorithm FSP from \cite{Even2015}.}
	\label{fig:fsp_flow}
\end{figure}%

The rest of this paper is organized as follows. Section \ref{sec:def}
defines the problem. Section \ref{sec:app} presents the
constraint-programming model, including the core model, the
decomposition scheme, the dominance relationships, and the search
procedure. Section \ref{sec:results} presents the experimental
results. Section \ref{sec:conc} concludes the paper.

%=============================================
\section{Problem Description}
\label{sec:def}
%=============================================

The Evacuation Planning Problem (EPP) was introduced by the authors in
\cite{Pillac2013}. It is defined on a directed graph
$\GS=(\sNS=\sES\cup\sTS\cup\sSS, \sAS)$, where $\sES$, $\sTS$, and
$\sSS$ are the set of evacuation, transit, and safe nodes
respectively, and $\sAS$ is the set of edges. The EPP is designed to
respond to a disaster scenario, such as a flood, which may determine a
time at which some edges become unavailable.  Each evacuation node
$k\in \sES$ is characterized by a number of evacuees $\nd{k}$, while
each arc $e$ is associated with a triple $(\et{e},\eu{e},\ef{e})$,
where $\et{e} $ is the travel time, $\eu{e}$ is the capacity, and
$\ef{e}$ is the time at which the arc becomes unavailable. We denote
by $e.tail$ (resp. $e.head$) the tail (resp. head) of an edge $e$. The
problem is defined over a \emph{scheduling horizon} $\horiz$, which depends
on the disaster forecast and the time to mobilize resources.  The
objective is either (1) to maximize the total number of evacuees reaching a
safe node (for a fixed horizon) or (2) to minimize the time at which the last evacuee
reaches a safe zone (for a variable horizon). In the following, we assume that the evacuation
is carried out using private vehicles, but the proposed approach could
be adapted to other contexts, such as building evacuation.

This paper extends the EPP to the \emph{non-preemptive} 
(simultaneous) evacuation planning
problem (\modelsim) and the \emph{non-preemptive phased}
evacuation planning problem (\modelphased). Both are designed to
assist evacuation planners with the scheduling of fully controlled
evacuations. Given a set of evacuation paths, the \modelsim
decides the start time, flow, and flow rate at which vehicles are
evacuating each individual evacuation node, ensuring that the
evacuation operates without interruption.  The \modelsim
allows several evacuation nodes to use the same road segments at the
same time. In contrast, the \modelphased{} guarantees that no two
evacuation nodes use the same road segment at the same time. The
practical interest of the \modelphased{} is to evacuate designated
priority areas quickly and efficiently, by limiting the risk of any
delay caused by slowdown or traffic accidents which may result from
merging traffic.

Formally, an evacuation plan associates with each evacuation area
$k\in\sES$ exactly one \emph{evacuation path} $\evacPath$ which is used to
route all residents in $k$ to a same safe node.  Let
$\Omega_{\evacPath} = \bigcup_{k\in\sES}\evacPath$ the \emph{set of
evacuation paths} for all evacuations nodes in $\sES$. The
characteristics of a path $\evacPath$ are as follows. We denote by
$\sAS_{\evacPath}$ (resp.  $\sNS_{\evacPath}$) the \emph{set of edges}
(resp. \emph{nodes}) \emph{of} $\evacPath$ and by $\sES(e)$ the 
\emph{set of evacuation areas} 
whose path contains edge $e$, i.e., $e \in \sAS_{\evacPath}$. The \emph{travel time} $\travelPathNode$ between the
evacuation area $k$ and a node $n\in\sNS_{\evacPath}$ is equal to the
sum of the path edges travel times separating $k$ from $n$ and
$\pathTransit$ is the \emph{total travel time} between the start and end of
$\evacPath$.  The \emph{path capacity} $\pathCap$ is the minimum edge
capacity of $\evacPath$. The \emph{last possible departure time}
$\pathLastDep$ along path $\evacPath$, i.e., the latest time at which
a vehicle can depart on $\evacPath$ without being blocked, is easily
derived from all $\travelPathNode (n \in \sNS_{\evacPath})$ and the
time $\ef{e}$ at which each path edge $e \in \sAS_{\evacPath}$ becomes
unavailable. If none of the path edges $e \in \sAS_{\evacPath}$ are
cut by the disaster then $\pathLastDep = \infty$; otherwise
$\pathLastDep = \min_{e\in\sAS_{\evacPath}}(\ef{e} -
\travelPathNode[k][e.head])$. Note that the latest path departure time
only depends on the path characteristics and not on $\horiz$.

%=============================================
\section{The Constraint-Programming Model}
\label{sec:app}
%=============================================

%We further formalise the \modelsim and the \modelphased{} as two capacity-constrained scheduling problems, where edges capacities restrict the total amount of flow that can be sent on the network at any point in time.

The \modelsim and \modelphased{} are two optimization
problems whose objective is either to maximize the number of evacuees
or to minimize the overall evacuation clearance time. The key
contribution of this paper is to model them as constraint-based scheduling
problems and to use \cp to solve them. This modeling
avoids time dicretization and makes it possible to design
non-preemptive plans with variable flow rates.  This section presents
the constraint-based scheduling models, including their decision
variables, their domains, and the constraints common to both
problems. This section then presents the constraint-based scheduling
models for the \modelsim in 
Sect.~\ref{subsec:modelsim} and for the \modelphased{} in 
Sect.~\ref{subsec:cpepp}.

\subsection{Decision Variables}

The models associate with each evacuation area $k \in \sES$ the
following decision variables: the total flow of vehicles evacuated
$\flowVar$ (i.e., the number of vehicles evacuated from area $k$), the
flow rate $\flowRateVar$ representing the number of vehicles departing
per unit of time, the evacuation start time $\startVar$ (i.e., the
time at which the first vehicle is evacuated from area $k$), the
evacuation end time $\taskEndVar$, and the total evacuation duration
time $\durVar$. The last three decision variables are encapsulated
into a task variable $\taskVar$ which links the evacuation start time,
the evacuation end time, and the evacuation duration and ensures that
$\startVar + \durVar = \taskEndVar$.

The decision variables range over natural numbers. The flow
and flow rates can only be non-negative and integral since a number of vehicles is
a whole entity. The models use a time step of one minute for flow
rates and task variables which, from an operational standpoint, is a
very fine level of granularity: Any time step of finer granularity
would only be too complex to handle in practice. The domains of the
decision variables are defined as follows. The flow variable is at
most equal to the evacuation demand: $\flowVar \in [0, \nd{k}]$ where
$[a, b] = \{v \in \mathbb{N} \mid a\le v \le b\}$. The flow-rate
variable has an upper bound which is the minimum of the evacuation
demand and the path capacity rounded down to the nearest integer,
i.e., $\flowRateVar \in [1, \min(\nd{k}, \lfloor \pathCap \rfloor)]$.
The upper bounds for the evacuation start time and evacuation end time
are the smallest of the scheduling horizon minus the path travel time,
which is rounded up to the nearest integer,
and the latest path departure time, i.e., $\startVar \in
[0,\min(\horiz- \lceil \pathTransit \rceil, \lfloor \pathLastDep
\rfloor)]$.  The evacuation of an area $k$ can last at most $\nd{k}$
minutes assuming the flow rate is set to one vehicle per minute: $\durVar \in
[0,\nd{k}]$. Note that the lower bound for duration is zero in order
to capture the possibility of not evacuating the area.

\subsection{Constraints}

The \modelsim requires to schedule the flow of evacuees
coming from each evacuation area~$k$ on their respective path
$\evacPath$ such that, at any instant~$t$, the flow sent on all paths
through the network does not exceed the network edges
capacities. These flow constraints can be expressed in terms of
cumulative constraints.  Consider an edge~$e$ and the set~$\sES(e)$ of
evacuation areas whose evacuation paths use~$e$. For each evacuation
area $k \in \sES(e)$, the model introduces a new task $\taskVar^{e}$
which is a view over task $\taskVar$ satisfying:
\begin{align*}
 \startVar^e = \startVar + \travelPathNode[k][e.tail]\enspace,\quad
 \durVar^e = \durVar \enspace,\quad
 \taskEndVar^e = \taskEndVar + \travelPathNode[k][e.tail]\enspace.
\end{align*}
This new task variable accounts for the number of vehicles from evacuation
area $k$ traveling on edge $e$ at any time during the scheduling
horizon. Note that \travelPathNode[k][e.tail] is computed as the sum of the travel times
on each edge, each rounded up to the next integer for consistency with the domains of 
the decision variables. While this approximation may slightly overestimates travel times, 
it also counterbalances possible slowdowns in real-life traffic, which are not taken into 
account in this model.

The constraint-based scheduling model for the \modelsim introduces the following cumulative
constraint for edge $e$:
\begin{align*}
\cumu(\{(\taskVar^e, \flowRateVar) \mid k \in \sES(e)\}, u_e).
\end{align*}
The constraint-based scheduling model for the \modelphased{} introduces a disjunctive
constraint for edge $e$ instead:
\begin{align}
\disj(\{\taskVar^e \mid k \in \sES(e)\}). \label{disj}
\end{align}

\subsection{The Constraint-Based Scheduling Models}
\label{subsec:models}

We are now in a position to present a constraint-based scheduling
model for \modelsim-MF:
\begin{align}
\max \quad
    & \objVar= \sum_{k\in\sES}\flowVar& \label{obj:maxflow}\\
\st  \quad
    & \flowVarUB = \durVar \times \flowRateVar & \forall k \in \sES \label{ctr:scalar}\\
    & \flowVar = \min(\flowVarUB, \nd{k}) & \forall k \in \sES \label{ctr:min}\\
    & \cumu(\{(\taskVar^e, \flowRateVar) \mid k \in \sES(e)\}, u_e) & \forall e \in \sAS \label{ctr:cum}
\end{align}
The objective (\ref{obj:maxflow}) maximizes the number of evacuated
vehicles. Constraints (\ref{ctr:scalar}) and (\ref{ctr:min}) link the
flow, flow rate, and evacuation duration together, by ensuring that
the total flow for each area $k$ is the minimum of the evacuation
demand and the flow rate multiplied by the evacuation duration. They
use an auxiliary variable $\flowVarUB$ denoting an upper bound on the
number of vehicles evacuated from area $k$. Constraints (\ref{ctr:cum})
impose the capacity constraints.

The model \modelsim-SAT is the satisfaction problem version
of \modelsim-MF where the objective (\ref{obj:maxflow}) has been removed
and the constraint 
\begin{align}
\flowVar = \nd{k} &\quad \forall k \in \sES \label{ctr:fullevac}
\end{align}
has been added to ensure that every vehicle is evacuated.

To minimize clearance time, i.e., to find the minimal scheduling
horizon such that all vehicles are evacuated, it suffices to add 
the objective function to \modelsim-SAT
\begin{align}
\min \quad \objVar= \max_{k\in \sES}(\taskEndVar + \pathTransit) \label{obj:minhoriz}
\end{align}
and to relax the start and end time domains to $[0,\horiz^\textsc{ub}]$ 
where $\horiz^\textsc{ub}$ is an upper bound on the horizon required to evacuate all vehicles to a shelter.
The resulting constraint-based scheduling model is denoted by \modelsim-CT.

A constraint-programming formulation \modelphased{}-MF of the
non-preemptive phased evacuation planning problem can be obtained from
\modelsim-MF by replacing~(\ref{ctr:cum}) with~(\ref{disj}), which
prevents the flows from two distinct origins to travel on the same
edge at the same time. \modelphased{}-SAT, which is the satisfaction
problem version of \modelphased{}-MF, is obtained by removing the
objective (\ref{obj:maxflow}) and adding the constraint
(\ref{ctr:fullevac}). \modelphased{}-CT, which minimizes the
evacuation clearance time, is obtained from \modelphased{}-SAT by
adding the objective (\ref{obj:minhoriz}). Note that since the
flow-rate bounds ensure that edges capacities are always respected in
\modelphased{}, the flow-rate variable can be directly set to its
upper bound to maximize evacuation efficiency. Hence, the following
constraints are added to the \modelphased{} model:
\begin{align}
 \flowRateVar = \min(\nd{k}, \pathCap) &\quad \forall k \in \sES. \label{phased:flowRate}
\end{align}

\subsection{Problem Decomposition}

This section shows how the \modelsim{} and \modelphased{} can be decomposed by
understanding which paths compete for edges and/or how they relate to each other. In the 
following we introduce the \textit{path dependency relationship}.

\begin{definition}
  Two paths
  $\evacPath[x]$ and $\evacPath[y]$ are \emph{directly dependent},
  which is denoted by $\evacPath[x] \direcdep \evacPath[y]$, if and
  only if they share at least a common edge, i.e.,
  $\sAS_{\evacPath[x]}\cap \sAS_{\evacPath[y]} \neq \emptyset$.
\end{definition}
\begin{definition}
  Two paths $\evacPath[x]$ and $\evacPath[z]$ are \emph{indirectly
    dependent} if and only if $\neg(\evacPath[x] \direcdep
  \evacPath[z])$ and there exists a sequence of directly dependent
  paths $\evacPath[y_1], \dots, \evacPath[y_n]$ such that
  $\evacPath[x]\direcdep\evacPath[y_1]$,
  $\evacPath[y_n]\direcdep\evacPath[y_z]$ and $\evacPath[y_1]\direcdep
  \evacPath[y_{2}], \dots, \evacPath[y_{n-1}]\direcdep
  \evacPath[y_{n}]$.
\label{def:indirecdep}
\end{definition}
\begin{definition}
  Two paths $\evacPath[x]$ and $\evacPath[y]$ are \emph{dependent},
  which is denoted by $\evacPath[x] \dep \evacPath[y]$, if they are
  either directly dependent or indirectly dependent. Conversely, paths
  $\evacPath[x]$ and $\evacPath[y]$ are \emph{independent}, which is
  denoted by $\evacPath[x] \indep \evacPath[y]$, if they are neither
  directly dependent nor indirectly dependent.
\end{definition}
Obviously, the \emph{path dependency} ~$\dep$ forms an
equivalence relationship, i.e., $\dep$ is reflexive, symmetric, and
transitive.

\begin{figure}[t]
	\centering
	\begin{subfigure}[b]{0.4\textwidth}
		\includegraphics[width=\textwidth]{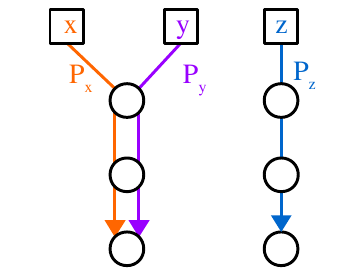}
		\caption{Directly dependent paths}
		\label{fig:paths_tree}
	\end{subfigure}%
	\begin{subfigure}[b]{0.4\textwidth}
		\includegraphics[width=\textwidth]{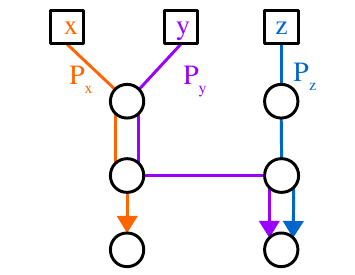}
		\caption{Indirectly dependent paths}
		\label{fig:paths_no_tree}
	\end{subfigure}%
	\caption{Illustrating independent nodes and paths.}
	\label{fig:paths}
\end{figure}

The key idea behind the decomposition is to partition the evacuation
areas $\sES$ into $\dependentNodesSetSet = \{\dependentNodesSet_0,
\dots, \dependentNodesSet_n\}$ in such a way that any two paths, respectively from
the set of evacuation areas $\dependentNodesSet_i$ and
$\dependentNodesSet_j$ ($0 \leq i < j \leq n$), are independent. As a
result, it is possible to solve the overall model by solving each
subproblem $\dependentNodesSet_i$ ($0 \leq i \leq n$)
independently and concurrently. 
Figure~\ref{fig:paths_tree} illustrates two sets of
evacuation nodes $\dependentNodesSet_0 = \{x, y\}$ and
$\dependentNodesSet_1 = \{z\}$ where paths $\evacPath[x]$ and
$\evacPath[y]$ are directly dependent and there are no indirectly
dependent paths. Figure~\ref{fig:paths_no_tree} illustrates a single
set of evacuation nodes $\dependentNodesSet = \{x, y, z\}$ where the
set of paths \{$\evacPath[x]$, $\evacPath[y]$\} and \{$\evacPath[y]$,
$\evacPath[z]$\} are directly dependent, while the set of paths
\{$\evacPath[x]$,$\evacPath[z]$\} are indirectly dependent. We now
formalize these concepts.

\begin{definition}
  Let $\Omega_{\evacPath}^{\dependentNodesSet_i}$ denote the paths of
  the set of nodes $\dependentNodesSet_i$.  Two sets $\dependentNodesSet_0$
  and $\dependentNodesSet_1$ of evacuation areas are independent if and only if
  any two paths from $\Omega_{\evacPath}^{\dependentNodesSet_0}$ and
  $\Omega_{\evacPath}^{\dependentNodesSet_1}$ respectively are
  independent. They are dependent otherwise.
\end{definition}

\begin{theorem} Let $\dependentNodesSetSet = \{\dependentNodesSet_0,
  \dots, \dependentNodesSet_n\}$ be a partition of $\sES$ such that
  $D_i$ and $D_j$ ($0 \leq i < j \leq n$) are independent. Then the
\modelsim and \modelphased{} can be solved by concatenating
the solutions of their subproblems $\dependentNodesSet_i$ ($0 \leq i \leq n)$.
\end{theorem}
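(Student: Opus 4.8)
The plan is to show that the constraint-based scheduling models decompose along the partition $\dependentNodesSetSet$ because the only constraints that couple different evacuation areas are the resource constraints \eqref{ctr:cum} (for \modelsim) or \eqref{disj} (for \modelphased), and these couple area $k$ and area $k'$ only when $\evacPath$ and $\evacPath[k']$ share a common edge. Formally, I would first observe that all other constraints in the models --- \eqref{ctr:scalar}, \eqref{ctr:min}, \eqref{ctr:fullevac}, \eqref{phased:flowRate}, the task-linking constraint $\startVar+\durVar=\taskEndVar$, the view definitions of $\taskVar^e$, and the variable domains --- refer to the variables of a single area $k$ only. Hence the constraint graph, whose vertices are the decision variables $\{\flowVar,\flowRateVar,\startVar,\taskEndVar,\durVar,\taskVar\}_{k\in\sES}$ and whose hyperedges are the constraints, connects variables of areas $k$ and $k'$ only through a \cumu or \disj constraint posted on some edge $e$ with $\{k,k'\}\subseteq\sES(e)$.

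Next I would argue that the partition $\dependentNodesSetSet$ is exactly the partition of $\sES$ into connected components of the ``shares-an-edge'' graph on evacuation areas: $k$ and $k'$ lie in the same block iff $\evacPath \dep \evacPath[k']$, by Definitions of direct/indirect dependency and the stated fact that $\dep$ is an equivalence relation. Consequently, if $k\in\dependentNodesSet_i$ and $k'\in\dependentNodesSet_j$ with $i\ne j$, then for every edge $e$ we cannot have both $k\in\sES(e)$ and $k'\in\sES(e)$, since that would give $\evacPath\direcdep\evacPath[k']$, contradicting independence of $\dependentNodesSet_i$ and $\dependentNodesSet_j$. Therefore each resource constraint posted on an edge $e$ involves tasks from a single block $\dependentNodesSet_i$ only, and the whole constraint system is the disjoint union of the subsystems restricted to each $\dependentNodesSet_i$.

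From this block-diagonal structure the theorem follows by a standard argument: a tuple of values for all variables satisfies the global model if and only if its restriction to each block satisfies the corresponding subproblem, because every constraint lies entirely within one block. For the satisfaction variants (\modelsim-SAT, \modelphased-SAT) this immediately gives that concatenating per-block feasible solutions yields a global feasible solution and conversely. For the optimization variants I would note that both objectives separate over the blocks: the max-flow objective \eqref{obj:maxflow} is $\sum_i \sum_{k\in\dependentNodesSet_i}\flowVar$, a sum of per-block terms, so maximizing globally is equivalent to maximizing each block independently; the clearance-time objective \eqref{obj:minhoriz} is $\max_i \max_{k\in\dependentNodesSet_i}(\taskEndVar+\pathTransit)$, a maximum of per-block terms, so an optimal global solution is obtained by taking an optimal solution of each subproblem (here one should also note that relaxing the start/end domains to $[0,\horizub]$ keeps the domains independent across blocks). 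In both cases concatenating optimal subproblem solutions gives an optimal solution of the original problem.

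The main obstacle is really just the bookkeeping of the first two paragraphs: one must verify carefully that no constraint --- in particular the ``view'' tasks $\taskVar^e$ and their defining equations, and the auxiliary variable $\flowVarUB$ --- secretly couples two areas, and that the partition into dependency classes genuinely coincides with the connected components of the edge-sharing graph (the indirect-dependency clause in Definition~\ref{def:indirecdep} is exactly what closes this under transitivity). Once independence of the constraint blocks is established, the equivalence of global feasibility/optimality with per-block feasibility/optimality is routine, and the ability to solve the blocks \emph{concurrently} is an immediate corollary since they share no variables or constraints.
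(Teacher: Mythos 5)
Your argument is correct and is essentially the reasoning the paper relies on (the paper states this theorem without an explicit proof, treating it as immediate from the fact that only the \cumu{}/\disj{} constraints couple distinct areas and that independence of the blocks forbids any shared edge between them). Your added detail that the objectives separate blockwise --- the sum in (\ref{obj:maxflow}) and the max in (\ref{obj:minhoriz}) --- together with the observation that all remaining constraints are per-area, is exactly the bookkeeping needed, so nothing is missing.
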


\noindent
The partition $\dependentNodesSetSet = \{\dependentNodesSet_0, \dots,
\dependentNodesSet_n\}$ can be generated by an algorithm computing the
strongly connected components of a graph.  Let
$\GS_{\Omega_{\evacPath}}$ be the directed graph consisting of the
edges and vertices of all paths $\evacPath\in\Omega_{\evacPath}$ and
let $\GS^{u}_{\Omega_{\evacPath}}$ be its undirected counterpart,
i.e., the graph obtained after ignoring the direction of all edges in
$\GS_{\Omega_{\evacPath}}$. The strongly connected components of
$\GS^{u}_{\Omega_{\evacPath}}$ define the partition
$\dependentNodesSetSet$. 

\subsection{Dominance Relationships}
\label{subsec:cap}

This section shows how to exploit dominance relationships to simplify
the constraint-based scheduling models. The key idea
is to recognize that the capacity constraints of some edges are always
guaranteed to be satisfied after introducing constraints on other particular edges. 

\begin{definition}
  Let $\sAS$ a set of edges and $e, e' \in \sAS$, $e \neq e'$. Edge $e$
  \emph{dominates} $e'$, denoted by $e \dom e'$, if and only if
\begin{itemize}
\item For simultaneous evacuation, the capacity of $e$ is less than or 
    equal to the capacity of $e'$: $\eu{e} \leq \eu{e'}$;
\item The set of paths using $e'$ is a subset of the set of paths using $e$: $\sES(e')\subseteq \sES(e)$ ;
\item For non-convergent evacuation paths, the travel times for evacuation paths in $\sES(e')$ between $e$ and $e'$ are the same.
\end{itemize}
\end{definition}

\noindent
%The key insight of this section is to observe that the cumulative or disjunctive constraints must only be stated on dominating edges. 
Note that two edges may be dominating each other. For this reason and without loss
of generality, this paper breaks ties arbitrarily (e.g., by selecting
the edge closer to a safe node as the dominating edge). Note also that the capacity condition is ignored for phased evacuation.

\begin{theorem}
Let $\sAS^{\dom}$ the set of dominating edges in $\sAS$. 
We can safely substitute $\sAS^{\dom}$ to $\sAS$ in (\ref{ctr:cum}) in \modelsim{}-MF such that the 
cumulative constraints are only stated for dominating edges. Similar results hold for \modelsim-CT/SAT, 
and for the disjunctive constraints in \modelphased-MF/CT/SAT.
%  The model defined by (\ref{obj:maxflow}--\ref{ctr:cum}) is
%  equivalent to the model defined by
%  (\ref{obj:maxflow-d}--\ref{ctr:cum-d}).  Similar results hold for
%  \modelsim-CT/SAT, \modelphased-MF/CT/SAT.
\end{theorem}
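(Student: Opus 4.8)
The plan is to show that, in every one of the six models, deleting the constraint posted on a dominated edge does not change the set of feasible solutions, and hence (since none of the objectives (\ref{obj:maxflow})/(\ref{obj:minhoriz}) nor the extra constraints (\ref{ctr:fullevac})/(\ref{phased:flowRate}) refer to edge constraints) does not change the set of optimal solutions. It therefore suffices to prove the local statement: whenever $e \dom e'$, any assignment of the decision variables satisfying the constraint posted on $e$ also satisfies the constraint posted on $e'$. Before doing that I would note that $\dom$ is transitive — capacities are ordered, the path sets $\sES(\cdot)$ are nested, and the travel-time offsets compose additively — so that $\dom$ is a preorder on $\sAS$ and every $e' \in \sAS \setminus \sAS^{\dom}$ is dominated by some $e \in \sAS^{\dom}$; this is what lets us argue redundancy against the full original model rather than only pairwise.

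Next I would establish the key structural fact: the task views on $e$ and on $e'$ differ by a single uniform time shift. Fix $e \dom e'$. For every area $k \in \sES(e') \subseteq \sES(e)$, both $e$ and $e'$ lie on the path $\evacPath$, so $\Delta := \travelPathNode[k][e'.tail] - \travelPathNode[k][e.tail]$ is well defined, and by the third bullet of the dominance definition — which holds by hypothesis for non-convergent paths and holds automatically for convergent ones, since two paths sharing both $e$ and $e'$ share the whole segment between them — this $\Delta$ does not depend on $k$. From the view equations defining $\taskVar^e$ and $\taskVar^{e'}$ we then get $\startVar^{e'} = \startVar^{e} + \Delta$, $\taskEndVar^{e'} = \taskEndVar^{e} + \Delta$, and $\durVar^{e'} = \durVar^{e}$, so $\{\taskVar^{e'} \mid k \in \sES(e')\}$ is the restriction of $\{\taskVar^{e} \mid k \in \sES(e)\}$ to $\sES(e')$, translated in time by $\Delta$.

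With this in hand, the \modelsim{} case is a short chain of inequalities bounding, for an arbitrary time $t$, the cumulative usage on $e'$:
\begin{align*}
\sum_{\substack{k \in \sES(e')\\ \startVar^{e'} \le t < \taskEndVar^{e'}}} \flowRateVar
\;=\; \sum_{\substack{k \in \sES(e')\\ \startVar^{e} \le t-\Delta < \taskEndVar^{e}}} \flowRateVar
\;\le\; \sum_{\substack{k \in \sES(e)\\ \startVar^{e} \le t-\Delta < \taskEndVar^{e}}} \flowRateVar
\;\le\; u_e \;\le\; u_{e'},
\end{align*}
where the equality is the uniform shift, the first inequality adds the non-negative contributions of $\sES(e) \setminus \sES(e')$, the second is the cumulative constraint (\ref{ctr:cum}) posted on $e$, and the last is the capacity condition of dominance. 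Hence $\cumu(\{(\taskVar^{e'},\flowRateVar)\mid k\in\sES(e')\},u_{e'})$ holds; the \modelsim{}-SAT and \modelsim{}-CT arguments are identical. For \modelphased{}, the disjunctive constraint on $e$ says the tasks $\{\taskVar^{e}\mid k\in\sES(e)\}$ are pairwise non-overlapping; restricting to $\sES(e')$ and translating all of them by the same $\Delta$ preserves pairwise non-overlap, so $\disj(\{\taskVar^{e'}\mid k\in\sES(e')\})$ holds and the capacity condition is never used — consistent with the remark that it is ignored in the phased case.

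The hard part — indeed essentially the only delicate point — will be the uniform-shift claim: the whole reduction hinges on the offset between the two task views being a single constant common to all areas routed through $e'$, since only then does the set of $e'$-tasks coincide with a translate of a sub-collection of the $e$-tasks. This is exactly the content of the third bullet of the dominance definition; I would emphasize that it is vacuous for convergent plans but genuinely necessary in general, as otherwise different areas would be shifted by different amounts and neither the cumulative bound nor the disjunctive non-overlap would transfer from $e$ to $e'$. Everything else (transitivity, the inequality chain, handling $\durVar = 0$ tasks that simply contribute nothing) is routine.
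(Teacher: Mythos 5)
Your argument is correct, and it supplies exactly the justification the paper omits: the theorem is stated in Sect.~\ref{subsec:cap} without an accompanying proof, so there is no authorial argument to compare against, but the intended reasoning is precisely your pairwise-redundancy claim. The two load-bearing points are both handled properly: (i) the uniform time shift between the task views on $e$ and on a dominated edge $e'$, which follows from the third bullet of the dominance definition (and holds automatically for convergent paths, since two paths through both $e$ and $e'$ coincide on the segment between them), and without which neither the \cumu{} bound nor the \disj{} non-overlap would transfer; and (ii) the transitivity/tie-breaking remark, which is genuinely needed to read $\sAS^{\dom}$ as a set with the property that every discarded edge is dominated by some retained edge --- the theorem as stated silently relies on this, and your observation that the signed travel-time offsets compose additively closes that gap. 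The inequality chain for the cumulative case, the restriction-plus-translation argument for the disjunctive case (correctly not using the capacity condition), and the remark that objectives (\ref{obj:maxflow}), (\ref{obj:minhoriz}) and constraints (\ref{ctr:fullevac}), (\ref{phased:flowRate}) never mention edge capacities, so redundancy of constraints implies identical feasible and optimal sets for all six model variants, are all sound.
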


\subsection{Additional Constraints to a Real-World Evacuation Scheduling Problem}

The flexibility of the constraint-based evacuation scheduling approach
allows to easily include many constraints appearing in real-world
evacuation scheduling. For example, each flow rate variable domain may
be restricted to a subset of values only, in order to account for the
number of door-knocking resources available to schedule the evacuation~\cite{Pillac2015}. 
Other real-world evacuation constraints may
restrict departure times in order to wait for a more accurate
prediction of an upcoming disaster or to restrict evacuation end times
to ensure that the last vehicle has left a certain amount of time
before the disaster strikes the evacuation area.

\subsection{Complexity of Phased Evacuations with Convergent Paths}

When using convergent paths, phased evacuations may be solved in
polynomial time. 

\begin{theorem}
\label{thm:phased:poly}
Model \modelphased-MF{} can be solved in polynomial time for convergent 
paths if all evacuation paths share the same latest completion time at
the safe node.
\end{theorem}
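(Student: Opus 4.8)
The plan is to reduce \modelphased-MF{} to a family of independent single‑machine scheduling problems, one per ``funnel'' of the convergent path forest, and to exploit the fact that each of these subproblems has a \emph{common} deadline --- which is exactly what makes it polynomially solvable.

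First I would rewrite the disjunctive constraints in a simpler form. Since the evacuation paths are convergent, their union is a forest of in‑trees rooted at the safe nodes, and whenever two paths share an edge they share an entire suffix ending at a common safe node. Hence ``sharing an edge'' is a transitive relation on $\sES$ (two suffixes of the same path are nested), i.e.\ an equivalence relation, and its classes $C$ are exactly the sets of evacuation areas whose paths enter a safe node through the same final edge; any two distinct classes are independent (they share no edge, and since all paths converge no chain of edge‑sharing paths can link them), so by the decomposition theorem it suffices to solve each $C$ separately. Now fix an edge $e$ lying on a shared suffix: by convergence every $k\in\sES(e)$ has the same travel time $c_e:=\pathTransit-\travelPathNode[k][e.tail]$ from $e.tail$ to the safe node, so the view task $\taskVar^e$ occupies the interval $[\startVar+\pathTransit-c_e,\;\taskEndVar+\pathTransit-c_e]$. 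Translating all of these by the common constant $c_e$ shows that constraint~(\ref{disj}) on $e$ is equivalent to requiring the ``arrival windows'' $[\startVar+\pathTransit,\;\taskEndVar+\pathTransit]$ (each of length $\durVar$) of the areas in $\sES(e)$ to be pairwise disjoint. Since, by transitivity, every pair of areas inside a class shares an edge, the whole system~(\ref{disj}) is equivalent to: for every class $C$, the arrival windows of all areas of $C$ are pairwise disjoint.

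Next, fix a class $C$. By~(\ref{phased:flowRate}) the flow rate $\flowRateVar$ is fixed, so the only remaining decision for area $k$ is its duration $\durVar\in[0,\nd{k}]$ (equivalently, its arrival window, of that length), with $\flowVar=\min(\durVar\cdot\flowRateVar,\nd{k})$, a nondecreasing concave function of $\durVar$ that becomes constant once $\durVar$ reaches $\lceil\nd{k}/\flowRateVar\rceil$. The arrival windows must be pairwise disjoint and contained in $[\pathTransit,\;H]$, where $\pathTransit$ acts as a release date (because $\startVar\ge0$) and $H$ is the latest arrival time of a vehicle at the safe node --- $H=\min(\horiz,\;\lfloor\pathLastDep\rfloor+\pathTransit)$ for \modelphased-MF{} --- which by hypothesis is the same value for every $k\in C$. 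So the subproblem for $C$ is exactly: on one machine, schedule jobs with release dates $r_k=\pathTransit$ and the common deadline $H$, choosing each job's processing time, so as to maximize the separable concave profit $\sum_{k\in C}\min(\durVar\cdot\flowRateVar,\nd{k})$.

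The crux, and the step I expect to be the main obstacle, is showing that this single‑machine problem is polynomial in spite of the non‑preemption (disjointness) requirement. I would proceed in three steps. (i) A vector of processing times $(d_k)_{k\in C}$ is realizable by pairwise‑disjoint windows inside the $[r_k,H]$ iff it is realizable preemptively: scheduling the jobs in nondecreasing order of $r_k$, each as early as possible, can finish after $H$ only if some set of jobs released at or after some time $\tau$ has total length $>H-\tau$, which is precisely the obstruction to preemptive feasibility. (ii) The preemptively feasible vectors are exactly $\{(d_k):\,0\le d_k\le\lceil\nd{k}/\flowRateVar\rceil,\ \sum_{k:\,r_k\ge\tau}d_k\le H-\tau\ \text{for every release value}\ \tau\}$, a polytope whose constraint matrix has the consecutive‑ones property and is thus totally unimodular, so its vertices are integral. (iii) Maximizing the separable concave objective over this integral polytope is polynomially solvable (e.g.\ via an equivalent minimum‑cost‑flow formulation, or a polymatroid‑style greedy over unit profit increments). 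Combining the optimal solutions of the $O(|\sAS|)$ classes, each of size $O(|\sES|)$, then yields an optimal solution of \modelphased-MF{} in polynomial time. Finally, I would point out that the common‑deadline hypothesis is what makes this work: with arbitrary per‑path completion times the class subproblem becomes maximum weighted throughput on a single machine with release dates, which is NP‑hard.
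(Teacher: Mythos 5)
Your proposal is correct, but it proves the theorem by a genuinely different route than the paper. The paper's argument is constructive and schedule-based: after the decomposition and dominance reduction, it builds an optimal \emph{preemptive} schedule on the single dominating edge by a time sweep that always runs an eligible task of largest (actual) flow rate, and then converts this schedule into a non-preemptive one of equal flow by postponing any interrupting task until the interrupted task completes. You instead (a) refine the decomposition to equivalence classes of paths sharing a final edge into the safe node (exploiting that edge-sharing is transitive for convergent paths, which is a slightly more careful treatment than the paper's claim that each subproblem has one edge common to all its paths), (b) translate all disjunctive constraints into disjointness of arrival windows at the safe node, (c) fix the durations and show via the earliest-release-date / busy-period argument that, with a \emph{common} deadline, non-preemptive feasibility coincides with preemptive feasibility, so the feasible duration vectors are exactly an integral polytope of suffix-sum (consecutive-ones, hence TU) inequalities, and (d) maximize the separable concave flow objective by min-cost flow or a polymatroid-style greedy over unit increments (rightly not relying on LP vertices alone, since a concave maximum need not be attained at a vertex). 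What each approach buys: the paper's greedy yields an explicit, very simple optimal schedule and a short argument, while your reduction is more formal, isolates precisely where the common-completion-time hypothesis is needed (both for the ERD feasibility equivalence and the polytope description), and situates the subproblem in a standard machine-scheduling/flow framework, which also explains (as you note) why the statement should fail with per-path deadlines. Minor details to tidy up if written out in full: the interaction of the per-edge rounding of travel times with the constant shift $c_e$, and the exact form of the common deadline $H$ given the domain bounds on $\startVar$ and $\taskEndVar$; neither affects the substance.
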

\begin{proof}[Sketch]
   Using the decomposition method and the dominance criterion,
   each subproblem with at least two evacuation paths includes exactly
   one dominating edge $e$ which is part of every evacuation path. %Thus,
   %each path has an associated task to that edge.
   An optimal schedule can then be obtained in two steps.
   The first step builds a preemptive schedule by a sweep over the
   time starting from the minimal earliest start time of the tasks on $e$ and  
   ending before the shared completion time.
   For each point in time, it schedules one eligible task (if existing)
   with the largest flow rate (ties are broken arbitrarily) where a task 
   is eligible if the point in time is not after its earliest start time on $e$
   and it has not been fully scheduled before that time. 
   Note if a task is almost fully scheduled except the last evacuation
   batch then this step considers the actual flow rate of the last batch
   instead, which may be smaller than the task flow rate.
   This preemptive schedule is optimal as for each point in time,
   the unscheduled eligible tasks do not have an (actual) greater flow rate 
   than the scheduled ones. 
   The second step converts this optimal preemptive schedule to a 
   non-preemptive one by postponing tasks interrupting others until 
   after the interrupted tasks are completed. This transformation does 
   not change the flows and hence the non-preemptive schedule has the  
   same objective value as the preemptive schedule. \qed 
\end{proof}

\subsection{The Search Procedure}
\label{sec:search}

The search procedure considers each \textit{unassigned} task in turn
and assigns its underlying variables. A task $\taskVar$ is
\textit{unassigned} if the domain of any of its variables $\{\startVar,
\durVar, \taskEndVar, \flowVar, \flowRateVar\}$ has more than one
value. The search procedure selects an unassigned task $\taskVar$ and
then branch on all its underlying variables until they all are assigned. 
Depending on the considered problem, the models use different heuristics to (1) find the next
unassigned task and to (2) select the next task variable to branch on
and the value to assign.

For \modelphased{}, the flow rate is directly set to the maximal
value. The search strategy is determined by the
problem objective as follows. If the objective maximizes the number of
evacuees for a given scheduling horizon $\horiz$, %then the search
%selects the unassigned task with the largest flow rate, labels the
%flow in decreasing order, and then label the task start time by
%decreasing order. 
the search is divided into two steps. The first step selects a task
with an unfixed duration and the highest remaining actual flow rate.
If the lower bound on duration of the task is at least two time units
less than its maximal duration then a minimal duration of the maximal
duration minus 1 is imposed and a maximal duration of the maximal
duration minus 2 on backtracking. Otherwise the search assigns
duration in decreasing order starting with the largest value in its
domain. The second step selects tasks according to their earliest
start time and assigns a start time in increasing order.\footnote{Note
  that the search procedure does not assume convergent paths or
  restrictions on the latest arrival times at the safe node.} If the
objective is to minimize the horizon $\horiz$ such that all vehicles
are evacuated, then the search selects the next unassigned task with
earliest start time by increasing order among all dominating edges,
selecting the one with maximal flow rate to break ties, and to label
the start time in increasing order.

For \modelsim, the different search heuristics are as
follow. For the choice (1), the strategy (1A) randomly selects an
unassigned task, and performs geometric restarts when the number of
backtracks equal to twice the number of variables in the model, using
a growth factor of 1.5. The strategy (1B) consists in selecting the
next unassigned task in decreasing order of evacuation demand for the
dominating edge with the greatest number of tasks. For the choice (2),
the strategy (2A) first labels the flow rate in increasing order, then
the task start time also in increasing order and, finally, the flow in
decreasing order. The strategy (2B) first labels the flow rate in
decreasing order, then the flow in decreasing order again and,
finally, the start time in increasing order.

%=============================================
\section{Experimental Results}
\label{sec:results}
%=============================================
% Git hash:

This section reports experiments on a set of instances used in
\cite{Even2015}. These instances are derived from a real-world case
study: the evacuation of the Hawkesbury-Nepean (HN) floodplain. The HN
evacuation graph contains 80 evacuated nodes, 5 safe nodes, 184
transit nodes, 580 edges and 38343 vehicles to evacuate. The
experimental results also consider a class of instances HN80-Ix using
the HN evacuation graph but a number of evacuees scaled by a factor $x
\in \{1.1, 1.2, 1.4, 1.7, 2.0, 2.5, 3.0\}$ to model population growth. 
For simplicity, the
experiments did not consider a flood scenario and assume that network
edges are always available within the scheduling horizon $\horiz$. It
is easy to generalize the results to various flood scenarios.

For each evacuation instance, a set of convergent evacuation paths was obtained
from the TDFS approach \cite{Even2015}. The TDFS model is a MIP which
is highly scalable as it aggregates edge capacities and abstracts
time.  The paths were obtained for the maximization of the number of
evacuees within a scheduling horizon of 10 hours, allowing preemptive evacuation scheduling. Thus, for each
instance, the set of evacuation paths can be thought of as forming a
forest where each evacuation tree is rooted at a safe node and each
leaf is an evacuated node. In this particular case, each evacuation
tree is a strongly connected component. It is important to emphasize
that paths are not necessarily the same for the different HN instances, 
nor necessarily optimal for non-preemptive scheduling,
which explains some non-monotonic behavior in the
results. Table \ref{tab:instances_stats} reports the evacuation paths,
the number of vehicles to evacuate (\#vehicles), the number of
strongly connected components (\#scc), the number of evacuated nodes,
the number of vehicles per scc (scc details) for each HN80-Ix
instance. Each strongly connected component is represented by a pair
$\{x,y\}$ where $x$ is the number of evacuated nodes and $y$ the
number of vehicles to evacuate. %Since the goal of this paper is to
%present a new scheduling approach based on a set of existing paths,
%all the experiments use the set of paths coming from~\cite{Even2015}. 

\begin{table}[tbp]
\footnotesize
\caption{The strongly connected components associated with each HN80-Ix instance.}
\label{tab:instances_stats}
\begin{tabularx}{\textwidth}{lllX}
\toprule
\textbf{Instance} & \textbf{\#vehicles} & \textbf{\#scc} & \textbf{scc details}\\ 
\midrule
\textbf{HN80} & 38343 & 5 & \{22,9048\}, \{17,10169\}, \{14,6490\}, \{22,9534\}, \{5,3102\}\\ 
\textbf{HN80-I1.1} & 42183 & 4 &  \{1,751\}, \{2,1281\}, \{40,23656\}, \{37,16495\}\\ 
\textbf{HN80-I1.2} & 46009 & 3 & \{2,1398\}, \{35,18057\}, \{43,26554\}\\ 
\textbf{HN80-I1.4} & 53677 & 5 & \{2,1631\}, \{28,16737\}, \{27,19225\}, \{4,3824\}, \{19,12260\}\\ 
\textbf{HN80-I1.7} & 65187 & 4 & \{22,16992\}, \{2,1980\}, \{42,33240\}, \{14,12975\} \\ 
\textbf{HN80-I2.0} & 76686 & 4 & \{15,13974\}, \{38,40612\}, \{2,2330\}, \{25,19770\} \\ 
\textbf{HN80-I2.5} & 95879 & 5 & \{32,36260\}, \{6,11214\}, \{16,17983\}, \{6,9324\}, \{20,21098\}\\ 
\textbf{HN80-I3.0} & 115029 & 5 & \{5,11574\}, \{12,14184\}, \{19,23403\}, \{7,13068\}, \{29,39651\}\\ 
\bottomrule
\end{tabularx}
\end{table}

The experimental results compare the flow scheduling results obtained
with the \modelsim{} and \modelphased{} approaches and the flow
scheduling problem (FSP) formulation presented in \cite{Even2015}. The
FSP is solved using a LP and it relaxes the non-preemptive
constraints. Indeed, the flow leaving an evacuated node may be
interrupted and restarted subsequently at any time $t\in\horiz$,
possibly multiple times. Moreover, the flow rates in the FSP algorithm
are not necessarily constant over time, giving substantial scheduling
flexibility to the FSP but making it very difficult to implement in
practice. Once again, the FSP comes in two versions. The objective of
the core FSP is to maximize the number of vehicles reaching safety,
while the objective of FSP-CT is to minimize the evacuation clearance
time. In order to compare the FSP algorithm and the
constraint-programming approaches of this paper fairly, the FSP is
solved with a time discretization of 1 minute. The experiments for the
\modelsim{} and \modelphased{} models use different search heuristics
and each experimental run was given a maximal runtime of 1800 seconds
per strongly connected component. The results were obtained on 64-bits
machines with 3.1GHz AMD 6-Core Opteron 4334 and 64Gb of RAM and the
scheduling algorithms were implemented using the programming language
JAVA 8 and the constraint solver Choco 3.3.0, except for \modelphased-MF where the 
search was implemented in ObjectiveCP.

\paragraph{Maximizing the Flow of Evacuees.}

Table \ref{tab:simultaneous-mf} compares, for each HN80-Ix instance
and a 10-hour scheduling horizon, the percentage of vehicles evacuated
(Perc. Evac.) and the solving time in seconds (CPU (s)) with FSP,
\modelsim-MF/SAT and \modelphased-MF/SAT. All solutions found with FSP are optimal and are thus
an upper bound on the number of vehicles that can be evacuated with
\modelsim and \modelphased. Prior to solving
\modelsim-MF (resp. \modelphased-MF), the algorithm attempts to
solve \modelsim-SAT (resp. \modelphased-SAT) with a 60s time limit
and, when this is successful, the annotation (SAT) is added next to
the percentage of vehicles evacuated. As we make use of decomposition and parallel computing, the
 reported CPU for NEPP/NPEPP is the latest of the time 
at which the best solution is found among all strongly connected components. 
The table reports the best results across the heuristics,
i.e., the run where the most vehicles are evacuated ; for the random strategy,
 the best result is reported across 10 runs (note that the standard deviation 
 for the objective value ranges between $0.4\%$ and $1.1\%$ only across all instances). The search
strategy for the best run is shown in column (Search) as a combination
\{TaskVar, VarOrder\} where TaskVar is the heuristic for choosing the
next task variable and VarOrder is the heuristic for labeling the
task variables.
\begin{table}[tbp]
\footnotesize
\caption{Percentage of Vehicles Evacuated with FSP, NEPP-MF/SAT, NPEPP-MF/SAT.}
\label{tab:simultaneous-mf}
\begin{tabularx}{\linewidth}{XXXXXXXX}
\toprule
 & \multicolumn{2}{c}{\textbf{FSP}}  & \multicolumn{3}{c}{\textbf{NEPP-MF/SAT}} & \multicolumn{2}{c}{\textbf{NPEPP-MF/SAT}}  \\ %
\cmidrule(r){2-3}\cmidrule(r){4-6}\cmidrule(r){7-8}
\textbf{Instance} & \textbf{CPU (s)} & \textbf{Perc. Evac.} & \textbf{CPU (s)} & \textbf{Perc. Evac.} & \textbf{Search} & \textbf{CPU (s)} & \textbf{Perc. Evac.} \\ 
\midrule
\textbf{HN80} & 0.9 & 100.0\% & 0.2 & 100.0\% (SAT) & \{1B, 2B\} & 3.4 & 96.9\% \\ 
\textbf{HN80-I1.1} & 1.1 & 100.0\% & 1538.9 & 99.2\% & \{1A, 2B\} & 1295.9 & 58.4\% \\ 
\textbf{HN80-I1.2} & 1.0 & 100.0\% & 0.4 & 100.0\% (SAT) & \{1B, 2B\} & 1444.4s & 57.7\% \\ 
\textbf{HN80-I1.4} & 1.3 & 100.0\%  & 1347.5 & 99.3\% & \{1A, 2B\} & 307.0 & 73.0\% \\ 
\textbf{HN80-I1.7} & 1.8 & 100.0\% & 1374.9 & 97.8\% & \{1A, 2A\} & 0.3 & 59.0\% \\ 
\textbf{HN80-I2.0} & 2.0 & 97.9\%  & 1770.1 & 93.1\% & \{1A, 2B\} & 5.9 & 52.8\% \\ 
\textbf{HN80-I2.5} & 1.8 & 82.2\%  & 1664.1 & 79.2\% & \{1A, 2B\} & 0.1 & 51.5\% \\ 
\textbf{HN80-I3.0} & 1.4 & 69.2\% & 887.2 & 67.5\% & \{1A, 2B\} & 0.1 & 43.1\% \\ 
\bottomrule
\end{tabularx}
\end{table}

The results highlight the fact that the constraint-based simultaneous
scheduling model finds very high-quality results. On the first five
instances, with population growth up to 70\%, the solutions of
\modelsim-MF are within 2.2\% of the preemptive bound. This is also
the case for the largest instance. In the worst case, the
constraint-based scheduling model is about 4.9\% away from the
preemptive lower bound. It is thus reasonable to conclude that the
constraint-based algorithms may be of significant value to emergency
services as they produce realistic plans for large-scaled controlled
evacuations.  For \modelphased-MF, the solver found optimal solutions
and proved optimality for all instances, except HN80-I1.1 and
HN80-I1.2 for which the best found solution was within $0.1\%$ of the
optimal one.\footnote{In our experiments, the problem \modelphased
  satisfies the condition for Theorem~\ref{thm:phased:poly}. Thus,
  these instances can be solved almost instantly using the algorithm
  outlined in the proof of Theorem~\ref{thm:phased:poly}.} The results
indicate that a phased evacuation is much less effective in practice
and decreases the number of evacuees reaching safety by up to 40\% in
many instances.  Unless phased evacuations allow an evacuation of all
endangered people, they are unlikely to be applied in practice, even
if they guarantee the absence of traffic merging.

Figure \ref{fig:solution_time} shows how the quality of solutions
improves over time for all HN-Ix instances which are not completely
evacuated, for a particular run. For all instances, a high-quality solution is found within
10 seconds, which makes the algorithm applicable to a wide variety of
situations. When practical, giving the algorithm more time may still
produce significant benefits: For instance, on HN-I1.7 the percentage
of vehicles increases from 93.0\% to 97.6\% when the algorithm is
given 1800s. Such improvements are significant in practice since they
may be the difference between life and death.

\begin{figure}[tbp]
\centering
\includegraphics[width=\textwidth-12em]{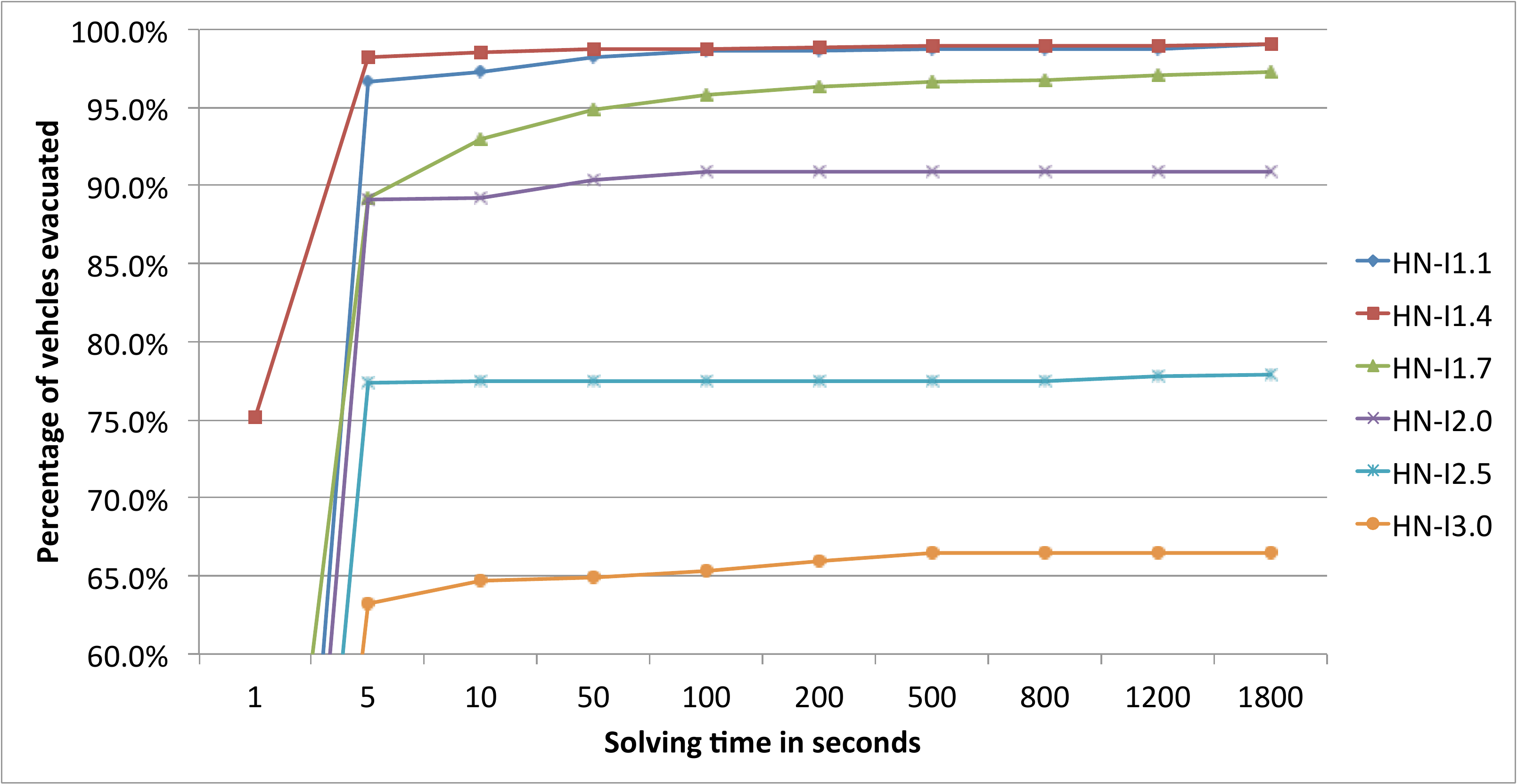}
\caption{Quality of solutions over Time for NEPP-MF.}
\label{fig:solution_time}
\end{figure}%

%\begin{table}[tb]
%\footnotesize
%\caption{Percentage of vehicles evacuated with \modelphased{}-MF/SAT.}
%\label{tab:phased-mf}
%\begin{tabularx}{\linewidth}{XXX}
%\toprule
%\textbf{Instance} & \textbf{CPU (s)} & \textbf{Perc. Evac.} \\ 
%\midrule
%\textbf{HN80} & 1063.6 & 95.4\% \\ 
%\textbf{HN80-I1.1} & 1.1 & 57.7\%  \\ 
%\textbf{HN80-I1.2} & 1.3 & 56.0\% \\ 
%\textbf{HN80-I1.4} & 220.7 & 71.7\%  \\ 
%\textbf{HN80-I1.7} & 1.8 & 57.4\%  \\ 
%\textbf{HN80-I2.0} & 1.3 & 51.3\% \\ 
%\textbf{HN80-I2.5} & 1282.3 & 50.6\% \\ 
%\textbf{HN80-I3.0} & 931.3 & 42.4\% \\ 
%\bottomrule
%\end{tabularx}
%\end{table}
%
%Table \ref{tab:phased-mf} presents the results for phased
%evacuations. The results indicate that a phased evacuation is much
%less effective in practice and decreases the number of evacuees
%reaching safety by up to 40\% in many instances.  
%%Phased evacuations are unlikely to be effective in practice, 
%Unless phased evacuations allow an evacuation of all endangered
%people, they are unlikely to be applied in practice,
%even if they guarantee the absence of traffic merging.

\paragraph{Profile of the Evacuation Schedules.}

To demontrate the benefits of \modelsim{}, it is useful to look at the
evacuation profiles produced by the various algorithms. Recall that
Fig.~\ref{fig:fsp_flow} displays a repartition of departure times
for seven evacuated nodes in the original HN80 instance in the optimal
solution produced by the FSP solver. \textit{The key observation is
  that, for several residential areas, the departure times are widely
  distributed within the scheduling horizon, indicating that the FSP
  solution makes heavy use of preemption.} In the FSP solution, the
number of vehicles departing at each time step is often equal to the
path capacity. But there are also some suprising combinations
\{evacuated node, time step\}, such as \{3, 50\}, \{3, 84\} and \{3,
85\} where the flow rate is respectively 22, 3, and 12 for evacuation
area 3.  In summary, the FSP solution is unlikely to be the basis of a
controlled evacuation: It is just too difficult to enforce such a
complicated schedule. Figure \ref{fig:cf_flow} shows a repartition of departure times for
the same nodes in the original HN80 instance using the \modelsim{}. 
The evacuation profile for the departure
times is extremely simple and can easily be the basis of a controlled
evacuation.  Its simplicity contrasts with the complexity of the FSP
solution and demonstrates the value of the constraint-programming
approach promoted in this paper.

\begin{figure}[tbp]
	\centering
	\includegraphics[width=\textwidth-12em]{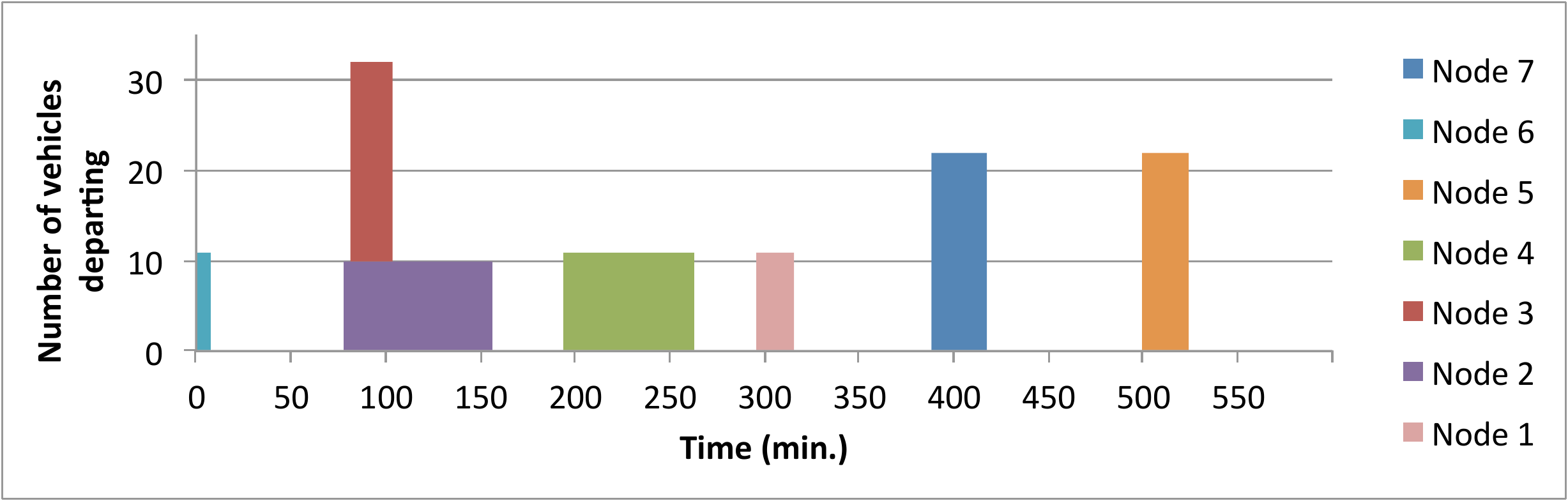}
	\caption{Departure times and flows of 7 evacuated nodes for the \modelsim{} solver.}
	\label{fig:cf_flow}
\end{figure}%

\paragraph{Minimizing the Clearance Time.}

Table \ref{tab:min_horiz} compares, for each HN80-Ix instance, the
minimal clearance time in minutes (CT (min)) found with FSP-CT,
\modelsim{}-CT and \modelphased{}-CT. All solutions found with FSP-CT
and \modelphased{}-CT are optimal for the given set of paths. Once
again, solutions found by \modelsim{}-CT are of high-quality and
reasonably close to the preemptive lower bound produced by FSP-CT. In
the worst case, the results are within 5.1\% of the preemptive lower
bound. The clearance times of the phased evacuations, which are
optimal, are significantly larger than for the \modelsim{}. Note again
that paths are different between instances and are not necessarily
optimal with respect to different scheduling horizons, which explain
inconsistencies such as the horizon found for HN80-I1.4 being shorter
than the horizon found for HN80-I1.2 with \modelphased{}-CT.

\begin{table}[tbp]
\footnotesize
\caption{Evacuation clearance time (CT) with FSP-CT, \modelsim{}-CT, and \modelphased{}-CT.}
\label{tab:min_horiz}
\begin{tabularx}{\linewidth}{XXXXXXXX}
\toprule
 & \multicolumn{2}{c}{\textbf{FSP-CT}} & \multicolumn{3}{c}{\textbf{\modelsim-CT}}  & \multicolumn{2}{c}{\textbf{\modelphased{}-CT}} \\ %
\cmidrule(r){2-3}\cmidrule(r){4-6}\cmidrule(r){7-8}
\textbf{Instance} & \textbf{CPU (s)} & \textbf{CT (min)} & \textbf{CPU (s)} & \textbf{CT (min)} & \textbf{Search} & \textbf{CPU (s)} & \textbf{CT (min)} \\ 
\midrule
\textbf{HN80} & 4.3 & 398 & 1370.9 & 409 & \{1B, 2A\} & $ 0.2 $ & 680  \\ 
\textbf{HN80-I1.1} & 7.5 & 582 & 280.3 & 616 & \{1A, 2A\} & $ 0.3 $ & 1716   \\ 
\textbf{HN80-I1.2} & 5.0 & 577 & 6.2 & 590 & \{1A, 2B\} & $ 0.3 $ & 1866 \\ 
\textbf{HN80-I1.4} & 6.1 & 587 & 1386.0 & 614 & \{1A, 2A\} & $ 0.4 $ & 1226  \\ 
\textbf{HN80-I1.7} & 7.3 & 583 & 1298.6 & 610 & \{1A, 2A\} & $ 0.3 $ & 2307 \\ 
\textbf{HN80-I2.0} & 4.0 & 625 & 1713.7 & 657 & \{1A, 2B\} & $ 0.4 $ & 2909 \\ 
\textbf{HN80-I2.5} & 8.4 & 1092 & 110.6 & 1133 & \{1A, 2B\} & $ 0.4 $ & 1884 \\ 
\textbf{HN80-I3.0} & 9.4 & 1232 & 212.6 & 1235 & \{1A, 2B\} & $ 0.3 $ & 2467 \\ 
\bottomrule
\end{tabularx}
\end{table}

\paragraph{The Impact of the Flow Rates.} 

The constraint-based scheduling models have the flow rates as decision
variables, which increases the flexibility of the solutions. Table
\ref{tab:simultaneous-ratectr-mf} studies the benefits of this
flexibility and compares the general results with the case where the
flow rates must be selected from a specific set, here $\{2, 6, 10, 15,
20\}$. This is similar to the approach proposed in \cite{Pillac2015}, which uses a fixed set of
response curves and their associated mobilization resources. Note that
the column-generation algorithm in \cite{Pillac2015} does not produce convergent plans and
discretizes time. The results seem to indicate that flexible flow
rates sometimes bring benefits, especially for the larger instances where
the benefits can reach 3.0\% ; nonetheless the possible loss when using fixed rates is not 
substantial and may capture some practical situations.

\begin{table}[tbp]
\footnotesize
\caption{Vehicles Evacuated with NEPP-MF with Flow Rates in $\{2, 6, 10, 15, 20\}$.}
\label{tab:simultaneous-ratectr-mf}
\begin{tabularx}{\linewidth}{XXXX}
\toprule
\textbf{Instance} & \textbf{CPU (s)} & \textbf{Perc. Evac.} & \textbf{Search} \\ 
\midrule
\textbf{HN80} & 0.4 & 100.0\% (SAT) & \{1B, 2B\} \\ 
\textbf{HN80-I1.1} & 1538.9 & 99.2\% & \{1A, 2B\}   \\ 
\textbf{HN80-I1.2} & 0.9 & 100.0\% (SAT) & \{1B, 2B\} \\ 
\textbf{HN80-I1.4} & 986.0 & 99.5\% & \{1A, 2B\} \\ 
\textbf{HN80-I1.7} & 1289.5 & 97.1\% & \{1A, 2A\} \\ 
\textbf{HN80-I2.0} & 1614.3 & 91.0\% & \{1A, 2B\} \\ 
\textbf{HN80-I2.5} & 1784.9 & 77.0\% & \{1A, 2B\} \\ 
\textbf{HN80-I3.0} & 1558.7 & 65.6\% & \{1A, 2B\} \\ 
\bottomrule
\end{tabularx}
\caption{Comparison of FSP and \modelsim{} problem sizes.}
\label{tab:problem_size}
\begin{tabularx}{\linewidth}{XXXXXXX}
\toprule
 & \multicolumn{2}{c}{\textbf{FSP-10}} & \multicolumn{2}{c}{\textbf{FSP-15}}  & \multicolumn{2}{c}{\textbf{\modelsim-MF}} \\ %
\cmidrule(r){2-3}\cmidrule(r){4-5}\cmidrule(r){6-7}
\textbf{Instance} & \textbf{\#cols} & \textbf{\#rows} & \textbf{\#cols} & \textbf{\#rows} & \textbf{\#vars} & \textbf{\#ctrs} \\ 
\midrule
\textbf{HN80} & 44651 & 145880 & 68651 & 218780 & 1958 & 2288  \\ 
\bottomrule
\end{tabularx}
\end{table}

\paragraph{Comparison of Model Sizes.}

One of the benefits of the constraint-based scheduling models is that
they do not discretize time and hence are highly scalable in memory
requirements. This is important for large-scale evacuations which may
be scheduled over multiple days. Table \ref{tab:problem_size} compares
the FSP problem size for a scheduling horizon of 10 hours (FSP-10) and
15 hours (FSP-15) with the \modelsim{}-MF problem size for
the HN80 instance, when using 1 minute time steps. It reports the
number of columns (\#cols) and the number of rows (\#rows) of the FSP
model, as well as the number of variables (\#vars) and the number of
constraints (\#ctrs) of the \modelsim{} model. As can be
seen, the number of variables and constraints grow quickly for the FSP
model and are about 2 orders of magnitude larger than those in the
\modelsim{}-MF model which is time-independent.

%\begin{table}[t]
%\footnotesize
%\caption{Comparison of FSP and \modelsim{} problem sizes.}
%\label{tab:problem_size}
%\begin{tabularx}{\linewidth}{XXXXXXX}
%\toprule
% & \multicolumn{2}{c}{\textbf{FSP-10}} & \multicolumn{2}{c}{\textbf{FSP-15}}  & \multicolumn{2}{c}{\textbf{\modelsim-MF}} \\ %
%\cmidrule(r){2-3}\cmidrule(r){4-5}\cmidrule(r){6-7}
%\textbf{Instance} & \textbf{\#cols} & \textbf{\#rows} & \textbf{\#cols} & \textbf{\#rows} & \textbf{\#vars} & \textbf{\#ctrs} \\ 
%\midrule
%\textbf{HN80} & 44651 & 145880 & 68651 & 218780 & 1958 & 2288  \\ 
%\bottomrule
%\end{tabularx}
%\end{table}

%=============================================
\section{Conclusion}
\label{sec:conc}
%=============================================

This paper proposes, for the first time, several constraint-based
models for controlled evacuations that produce practical and
actionable evacuation schedules. These models address several limitations
of existing methods, by ensuring non-preemptive scheduling and
satisfying operational evacuation constraints over mobilization resources.  The
algorithms are scalable, involve no time discretization, and are
capable of accommodating side constraints for specific disaster
scenarios or operational evacuation modes. Moreover, the models have
no restriction on the input set of evacuation paths, which can be convergent or
not. Preliminary experiments show that high-quality solutions, with an objective 
value close to optimal preemptive solutions objectives, can be
obtained within a few seconds, and improve over time.
Future work will focus on improving the propagation strength of the 
cumulative constraint for variable durations, flows, and flow rates, and 
on generalizing the algorithm for the joint evacuation planning and scheduling,

%routing and scheduling.  The former point steams from our observation
%that potential propagation for energy-based filtering is lost in the
%case when the lower bound on the flow variable is significantly
%greater than the product of of the lower bounds of the flow rate and
%duration variables, which is used for the energy-based filtering. A
%filtering algorithm considering the lower bound of the flow variable
%instead would potentiallly lead to an earlier propagation and
%identification of inconsistencies.

\clearpage

\bibliography{cp2015}

\begin{thebibliography}{10}
\providecommand{\url}[1]{\texttt{#1}}
\providecommand{\urlprefix}{URL }

\bibitem{Andreas2009}
Andreas, A.K., Smith, J.C.: Decomposition algorithms for the design of a
  nonsimultaneous capacitated evacuation tree network. Networks  53(2),
  91--103 (2009)

\bibitem{Bish2013}
Bish, D.R., Sherali, H.D.: Aggregate-level demand management in evacuation
  planning. European Journal of Operational Research  224(1),  79--92 (2013)

\bibitem{Bretschneider2012}
Bretschneider, S., Kimms, A.: Pattern-based evacuation planning for urban
  areas. European Journal of Operational Research  216(1),  57--69 (2012)

\bibitem{Cepolina08}
Cepolina, E.M.: Phased evacuation: An optimisation model which takes into
  account the capacity drop phenomenon in pedestrian flows. Fire Safety Journal
  (44),  532--544 (2008)

\bibitem{Even2014}
Even, C., Pillac, V., Van~Hentenryck, P.: Nicta evacuation planner: Actionable
  evacuation plans with contraflows. In: Proceedings of the 20th European
  Conference on Artificial Intelligence (ECAI 2014). Frontiers in Artificial
  Intelligence and Applications, vol. 263, pp. 1143--1148. IOS Press, Amsterdam
  (2014)

\bibitem{Even2015}
Even, C., Pillac, V., Van~Hentenryck, P.: Convergent plans for large-scale
  evacuations. In: Proceedings of the 29th AAAI Conference on Artificial
  Intelligence (AAAI-15) (2015), in press

\bibitem{Huibregtse2010}
Huibregtse, O.L., Bliemer, M.C., Hoogendoorn, S.P.: Analysis of near-optimal
  evacuation instructions. Procedia Engineering  3,  189--203 (2010), 1st
  Conference on Evacuation Modeling and Management

\bibitem{Huibregtse2012}
Huibregtse, O.L., Hegyi, A., Hoogendoorn, S.: Blocking roads to increase the
  evacuation efficiency. Journal of Advanced Transportation  46(3),  282--289
  (2012)

\bibitem{Huibregtse2011}
Huibregtse, O.L., Hoogendoorn, S.P., Hegyi, A., Bliemer, M.C.J.: A method to
  optimize evacuation instructions. OR Spectrum  33(3),  595--627 (2011)

\bibitem{Lim2012}
Lim, G.J., Zangeneh, S., Baharnemati, M.R., Assavapokee, T.: A capacitated
  network flow optimization approach for short notice evacuation planning.
  European Journal of Operational Research  223(1),  234--245 (2012)

\bibitem{Pillac2013}
Pillac, V., Even, C., Van~Hentenryck, P.: A conflict-based path-generation
  heuristic for evacuation planning. Tech. Rep. VRL-7393, NICTA (2013),
  arXiv:1309.2693, submitted for publication

\bibitem{Pillac2015}
Pillac, V., Cebrian, M., Van~Hentenryck, P.: A column-generation approach for
  joint mobilization and evacuation planning. In: International Conference on
  Integration of Artificial Intelligence and Operations Research Techniques in
  Constraint Programming for Combinatorial Optimization Problems (CPAIOR).
  Barcelona (may 2015)

\bibitem{Pillac2014}
Pillac, V., Van~Hentenryck, P., Even, C.: A path-generation matheuristic for
  large scale evacuation planning. In: Blesa, M., Blum, C., Voss, S. (eds.)
  Hybrid Metaheuristics. Lecture Notes in Computer Science, vol. 8457, pp.
  71--84. Springer (2014), 9th International Workshop on Hybrid Metaheuristics

\bibitem{Richter2013}
Richter, K.F., Shi, M., Gan, H.S., Winter, S.: Decentralized evacuation
  management. Transportation Research Part C: Emerging Technologies  31,  1--17
  (2013)

\end{thebibliography}

\end{document}